\theoremstyle{observation}
 \newtheorem{observation}{Observation}[subsection]
\newcommand{\vect}[1]{\mathbf{#1}}
\newcommand{\Wskip}[1]{ }
  \providecommand\BibTeX{{%
    \normalfont B\kern-0.5em{\scshape i\kern-0.25em b}\kern-0.8em\TeX}}}
\begin{document}

\title{Triple Sparsification of Graph Convolutional Networks without Sacrificing the Accuracy}

\author{Md. Khaledur Rahman}
\affiliation{%
  \institution{Indiana University Bloomington}
  \country{}
}
\email{morahma@iu.edu}

\author{Ariful Azad}
\affiliation{%
  \institution{Indiana University Bloomington}
  \country{}
 }
 \email{azad@iu.edu}


\begin{abstract}
  Graph Neural Networks (GNNs) are widely used to perform different machine learning tasks on graphs. As the size of the graphs grows, and the GNNs get deeper, training and inference time become costly in addition to the memory requirement. Thus, without sacrificing accuracy, graph sparsification, or model compression becomes a viable approach for graph learning tasks. A few existing techniques only study the sparsification of graphs and GNN models. In this paper, we develop a SparseGCN pipeline to study all possible sparsification in GNN. We provide a theoretical analysis and empirically show that it can add up to 11.6\% additional sparsity to the embedding matrix without sacrificing the accuracy of the commonly used benchmark graph datasets. 
\end{abstract}


\keywords{graph convolutional networks, classification, sparsification}


\maketitle
\section{Introduction}
In recent years, Graph Neural Networks (GNNs) have become popular for various learning tasks \cite{wu2020comprehensive,zhang2020deep} such as node classification \cite{kipf2016semi} and link prediction \cite{zhang2018link}.
Popular message-passing-based GNNs such as Graph Convolutional Network (GCN)~\cite{kipf2016semi} and GraphSAGE~\cite{hamilton2017inductive} learn the representation of a node by aggregating information from its neighbors.
Dominant computations in such a GNN depend on the input graph, the hidden layer representation of nodes (embeddings), and weight matrices.
Consequently, the computational complexity of GNN training and inference depends completely on the number of non-zero entries in these matrices.
As graphs become bigger and GNNs become deeper, GNNs are increasingly demanding more computational resources. 
This paper aims to address this challenge by exploiting sparsity in all aspects of GNNs. 

It is well known that GCN training and inference are computationally intensive for large-scale graphs. For example, You et al.~\cite{you2021gebt} explained that a 2-layer GCN model with 32-dimensional embeddings in hidden layers may require 19 GFLOPs (FLOPs: floating point operations) on the Reddit graph with about 232K nodes and and 114M edges. 
In comparison, the popular ResNET50 model~\cite{he2016deep} requires 8 GFLOPs for a pass over ImageNet~\cite{russakovsky2015imagenet,deng2009imagenet}.
Furthermore, the actual performance of GCN is significantly worse in practice because GCN requires sparse matrix multiplications that do not attain the peak performance of processors and GPUs~\cite{gale2020sparse}.
As a result, it is extremely difficult to train a GCN with a graph with billions of nodes even with hundreds of GPUs.   

A viable solution to GCN's enormous computation demand is to sparsify input graphs, intermediate representations, and model parameters.
Several prior work~\cite{chen2021unified, you2021gebt, li2020sgcn} have already showed that limited sparsification of the input graph and/or model weight do not reduce the accuracy of GCNs. 
Following on the footsteps of these pioneering work, we demonstrate that all aspects of data, features, embedding, and GCN weight matrices can be sparsified without sacrificing the performance of GCNs. 


In each layer of a GCN (and most other GNNs), the hidden representation or embedding in a layer is determined by two matrix multiplications involving the adjacency matrix, the embedding matrix of the previous layer (input features being the starting embedding), and the weight matrix \cite{kipf2016semi}. While the adjacency matrix almost always comes in sparse formats, the other matrices are usually dense in most GNNs. Given the involvement of three matrices in the computation of each layer, there are opportunities to sparsify all three matrices to bring down the computational requirements. 
However, all previous work on sparse GNNs considered sparsifying the input graph and/or model weights~\cite{chen2021unified, you2021gebt, li2020sgcn}. 
No previous work considered sparsifying the embedding matrices even though they often contain more non-zero elements than the adjacency and weight matrices. 
Thus, the sparsification of the embedding matrix is expected to bring down the computational requirement of GNNs significantly. 
In this paper, we consider sparsifying  all three matrices involved in a GNN layer. We theoretically demonstrate that embedding sparsification accompanied by graph and weight sparsifications reduces the number of Multiply-ACcumulate (MAC) operations needed to train a GNN model. The reduced computational requirement could be achieved without sacrificing the test accuracy. 
As with previous work, we demonstrate the impact of sparsity by counting the number of MAC operations needed for GNN training and inference. Our goal is not to show the actual reduction of runtime that depends on the implementation of sparse matrix multiplications and hardware platforms~\cite{gale2020sparse}.

%
The main contributions of this paper are summarized below.
\begin{itemize}
    \item We study the impact of complete sparsification of all matrices (the input graphs, network weight matrices, and hidden layer representations) in the GCN model.
    \item We use several sparsification techniques including Top-$k$, Sorting, and Sensitivity-based methods. 
    We derive theoretical bounds on the reduction of MAC operations for different induced sparsity.
    \item Our experimental results shows that SparseGCN can attain a higher percentage of overall sparsity than the existing methods without sacrificing the baseline accuracy. 
\end{itemize}
\begin{figure*}[!ht]
    \centering
    \includegraphics[width=0.85\linewidth]{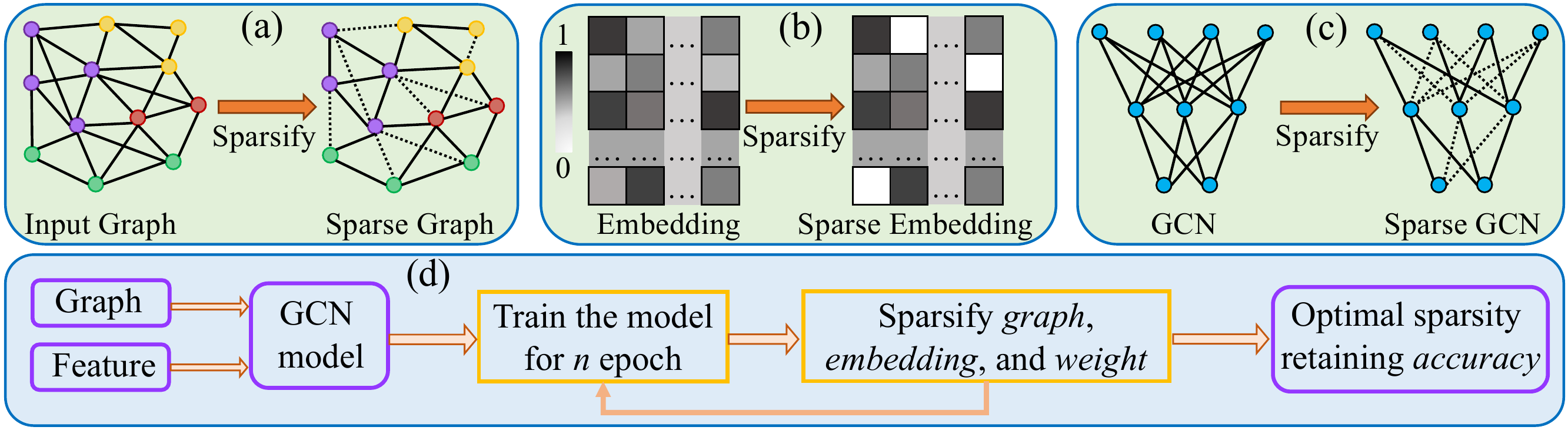}
    \caption{Example of (a) graph sparsification by removing edges (dotted lines), (b) embedding matrix sparsification by assigning zero, and (c) weight sparsification by removing edges in the GCN model. (d) A workflow of our sparse training procedure. An input graph with node features is fed to the GCN model. It is trained for $n$ epochs and records the test accuracy. After that, a sparse routine introduces sparsity in adjacency, embedding, and weight matrice, and then the model is trained again. The sparsification and re-training steps continue until we achieve an optimal level of sparsity without sacrificing test accuracy.}
    \label{fig:workflow}
\end{figure*}
\section{Background}
\subsection{Notations}
Let $G(V, E)$ be a directed graph, where $V=\{v_1, \ldots v_n\}$ is the set of vertices and $E=\{e_1, \ldots, e_m\}$ is the set of edges such that $|V|=n$, and $|E|=m$. 
Let $\vect{X}\in \mathbb{R}^{n\times d}$ store $d$-dimensional input features of $n$ vertices. 
$\vect{A}$ denotes the adjacency matrix of $G$ where $\vect{A}_{ij}=1$ if $(v_i, v_j)\in E$, otherwise $\vect{A}_{ij}=0$. The core computation at the $l$-th layer of GCN \cite{kipf2016semi} is:
\begin{equation}
\label{eqn:gcn}
    \vect{H}^{l} = \sigma(\vect{\hat{A}}\vect{H}^{l-1}\vect{W}^{l-1})
\end{equation}
where, $\vect{W}$ is the neural network weight matrix, $\sigma$ is the activation function and $\vect{H}$ is the hidden representation of corresponding convolutional layer such that $\vect{H}^0 = \vect{X}$. In Eqn. \ref{eqn:gcn}, $\vect{\hat{A}}$ is the renormalized graph Laplacian matrix such that $\vect{\hat{A}}=\vect{\tilde{D}}^{-\frac{1}{2}}\vect{\tilde{A}}\vect{\tilde{D}}^{\frac{1}{2}}$, $\vect{\tilde{A}} = \vect{I} + \vect{A}$, where $\vect{I}$ is the identity matrix, and $\tilde{D}_{ii} = \sum_{j}\tilde{A}_{ij}$. We optimize the GCN model using the negative log-likelihood loss function.

\subsection{Related Work}

\textbf{Graph Neural Networks.} 
Over the last decade, hundreds of GNN methods, libraries, and software have been developed for semi-supervised, unsupervised, and self-supervised learning on graphs~\cite{wu2020comprehensive, liu2022graph,rahman2021comprehensive,rahman2020force2vec,rahman2022markovgnn,velivckovic2017graph,xu2018powerful}. 
Their success in graph learning tasks and their limitations such as over-smoothing and neighborhood explosion problems are also well-documented in the literature~\cite{alon2020bottleneck,oono2019graph,chiang2019cluster}. 
Here, our focus is the computational challenges arising in GNN training due to the size of different matrices in Eqn. \ref{eqn:gcn}. 


\begin{table}[!htb]
\centering
\caption{Papers discussing pruning strategy of a GNN.}
\arrayrulecolor{black}
\label{tab:papers}
\begin{tabular}{c|p{1.2cm}p{1.3cm}p{1.2cm}} 
\hline
Paper(s) & Prune $\vect{\hat{A}}$? & Prune $\vect{W}$? & Prune $\vect{H}$? \\ \hline
SGCN~\cite{li2020sgcn}, FastGCN~\cite{chen2018fastgcn}    &        \textbf{\ding{52}}                                                  &                               \textbf{\ding{55}}                              &     \textbf{\ding{55}}                                                             \\ 

ULTH~\cite{chen2021unified}, GEBT~\cite{you2021gebt}     &\textbf{\ding{52}}                                                          &                               \textbf{\ding{52}}                            &                               \textbf{\ding{55}}                                  \\ 

SparseGCN (Ours)     & \textbf{\ding{52}}  & \textbf{\ding{52}}  &  \textbf{\ding{52}}                                                                                                                                          \\
\hline
\end{tabular}
\arrayrulecolor{black}
\end{table}

\textbf{Sparsifications in GNNs.} 
In GNNs, three matrices are involved (see Eqn. \ref{eqn:gcn}) that can be sparsified jointly or separately. In the literature, the sparsification of GNNs has been tackled by researchers considering two directions: (i) input graph or adjacency matrix sparsification, and (ii) model weight sparsification. The former has been studied much in the GNN literature~\cite{chen2018fastgcn,zeng2019graphsaint}; however, the latter has not been explored that much. There are a handful of studies in the literature that discuss such sparsifications of graphs \cite{ye2021sparse,li2020sgcn}. We observe a few studies in the literature for GNN model compression. The quantization or binarization technique can compress the GNN model too~\cite{tailor2020degree,bahri2021binary,wang2021bi}. In recent years, Unified lottery Ticket Hypothesis (ULTH) \cite{chen2021unified} and early-bird ticket \cite{you2021gebt} are the two noticeable works for both graph and model weight sparsifications in GNNs. However, the embedding matrix sparsification has not been explored previously except random \emph{dropout} for regularization. We summarize the pruning or sparsification related papers of GNN in Table \ref{tab:papers}. In this paper, we study the fully sparse GCN by analyzing the sensitivity of different sparse matrices in Eqn. \ref{eqn:gcn}.
\section{Methods}
We use GCN as a representative model to induce triple sparsity in three different matrices and call this approach SparseGCN. 
\subsection{Algorithmic Workflow}
Our training strategy, after adding sparsity to the model, is similar to most of the methods in the literature \cite{chen2021unified,li2020sgcn}. We followed the protocol  used in the unified lottery ticket hypothesis paper~\cite{chen2021unified}. We assume that $m^l_w$ and $m^l_h$ are two differential binary masks of weights and embedding matrices, respectively, on the $l$-th layer. 

\textbf{Graph Sparsification.} 
To add sparsity to the graph or adjacency matrix, we remove a subset of edges from the graph. In Fig. \ref{fig:workflow}(a), we show the removed edges by dotted lines. Since the sparsified graph is used in all layers, we do not create additional masks for different layers. In this step, we remove $a\%$ edges from the weighted adjacency matrix $\vect{\hat{A}}$ using a chosen sparsification technique.

\textbf{Embedding Sparsification.} Unlike previous work, we also sparsify the embedding matrix shown in the schematic diagram \ref{fig:workflow}(b). To sparsify it, we select $h\%$ non-zero entries and set them to 0. We perform this sparsification step with a  Boolean mask $m^l_h$ and then masking entries from the embedding matrix using $m^l_h\odot \vect{H}^l$. 

\textbf{Weight Sparsification.} Similar to previous studies \cite{chen2021unified,you2021gebt}, we also sparsify model weights of GCN in different layers as shown in Fig. \ref{fig:workflow}(c). We mask out a fraction of non-zero entries from the weight matrix using the operation $m^l_w\odot \vect{W}^l$. 

In the workflow, as shown in Fig. \ref{fig:workflow}(d), we train the GCN model for the input graph with associated node features to obtain the baseline accuracy for the node classification problem. After that, we sparsify the graph, embedding matrix, and weight matrix using a sparsification technique described in Section \ref{sec:sparsification}. Then, we re-train the model with sparsified matrices and compare the accuracy level. If it is similar to the baseline accuracy, we sparsify the sparse matrices again and re-train the model. We continue this process until the accuracy level drops compared to the baseline. Finally, we report the highest sparsity level for which the sparsed model can retain the baseline accuracy.
\subsection{Sparsification Techniques}
\label{sec:sparsification}
For our experiments analyses, we used three pruning strategies: \textbf{(i) Random-} a naive and straightforward approach for the sparsification of any matrices in Eqn. \ref{eqn:gcn}. We pick $p\%$ entries from the non-zero elements of the matrices and mask them out by assigning zero to them. \textbf{(ii) Sorting-based-} a global sparsification technique where we sort all the non-zero entries of the matrices based on the absolute values. Then, we pick the smaller $p\%$ entries and mask them out by assigning zero. The rationale for this technique is that we can remove more insignificant entries globally which will have less computational effect. \textbf{(iii) Top-$k$-} a local sparsification technique where each row of the matrix is sorted based on the absolute value of the non-zero entries. Then, we select the smaller $k$ entries from each row and mask them out by assigning zero. The intuition behind this approach is that the absolute smaller non-zero entries contribute less to the matrix multiplication. 
\subsection{Theoretical Analysis} We focus on the computation in the $l$th layer of a GNN, but the obtained bounds are extendable to other layers. 
We assume that the graph is stored in the compressed sparse row (CSR) format so that the memory requirement is $O(m)$, and the number of Floating-point Operations (FLOPs) in sparse-dense matrix multiplication meets its lower bound~\cite{koanantakool2016communication}. Suppose, the dimensions of $\vect{H}^l$, and $\vect{W}^l$ matrices in the $l$-th convolutional layer of GCN are $n\times d$, and $d\times f$, respectively. Then, we can deduce the following bounds.
\begin{lemma}
\label{lem:upper} The total number of MAC operations on the $l$-th layer is $(fm+dfn-fn)$, where $d > f$.
\end{lemma}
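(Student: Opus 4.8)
The plan is to reduce the claim to a direct count of multiply--accumulate operations inside the two matrix products of Eqn.~\ref{eqn:gcn}. Dropping the entrywise nonlinearity $\sigma$ (it performs no MACs) and the one-time normalization that produces $\vect{\hat{A}}$, the $l$-th layer must evaluate $\vect{\hat{A}}\,\vect{H}^l\,\vect{W}^l$, with $\vect{\hat{A}}$ sparse ($m$ stored nonzeros, in CSR), $\vect{H}^l$ of size $n\times d$, and $\vect{W}^l$ of size $d\times f$. There are two ways to parenthesize this triple product, so the first step is to compare them: $(\vect{\hat{A}}\vect{H}^l)\vect{W}^l$ applies the sparse multiply to a matrix with $d$ columns and then does an $n\times d$ by $d\times f$ dense product, whereas $\vect{\hat{A}}(\vect{H}^l\vect{W}^l)$ does the dense product first and then a sparse multiply against a matrix with only $f$ columns. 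The dense product costs $dfn$ MACs in either order, while the sparse multiply costs a number of operations proportional to the number of columns it is applied to; hence the hypothesis $d>f$ is exactly what makes $\vect{\hat{A}}(\vect{H}^l\vect{W}^l)$ the cheaper schedule, and it is the one I would analyze. Isolating this comparison (together with its converse) as a sub-claim is worthwhile, since it is what makes $(fm+dfn-fn)$ the exact cost of the natural schedule rather than merely an upper bound.

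Next I would count MACs in that schedule product by product. Forming $\vect{Z}:=\vect{H}^l\vect{W}^l$ yields $nf$ entries, each an inner product of length $d$, for $dfn$ MACs. For the sparse--dense product $\vect{\hat{A}}\vect{Z}$ I would scan the $m$ nonzeros of $\vect{\hat{A}}$ once --- here the CSR assumption and the sparse-matmul lower bound of~\cite{koanantakool2016communication} are used --- where each nonzero $\vect{\hat{A}}_{ij}$ scales row $j$ of $\vect{Z}$ and adds it into row $i$ of the output, i.e.\ $f$ elementary updates; over all $m$ nonzeros that is $fm$ updates, but the first time each of the $n$ output rows is written the update is a scaled store rather than a multiply-accumulate, so $fn$ of them do not count as MACs and the aggregation contributes $fm-fn$. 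Adding the two products gives $dfn+(fm-fn)=fm+dfn-fn$, the claimed figure.

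The main obstacle is the accounting in the sparse step: being precise about the $-fn$ correction (equivalently, that aggregating $\deg(i)$ scaled neighbor rows into output row $i$ saves one vector addition per row), and being explicit that $m$ is the nonzero count of $\vect{\hat{A}}$ \emph{as stored}, self-loops included, so the aggregation is genuinely $fm$ and not $f(m+n)$ or $f(m-n)$. A secondary item is to close the optimality direction with $d>f$ once more, so that no alternative parenthesization beats $fm+dfn-fn$. Everything else --- the $dfn$ term, and the fact that $\sigma$ and the normalization pre-pass contribute no MACs --- is routine.
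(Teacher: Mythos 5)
Your proposal is correct and follows essentially the same route as the paper: pick the right-associated product $\vect{\hat{A}}(\vect{H}^l\vect{W}^l)$ because $d>f$, count the dense product and the sparse aggregation separately, and sum to get $fm+dfn-fn$. The only difference is bookkeeping --- the paper counts FLOPs ($2dfn-fn$ and $2fm-fn$) and then halves them via the MACs $=$ FLOPs$/2$ convention, whereas you count MACs directly and absorb the $-fn$ correction into the sparse step; both conventions land on the same total.
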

\begin{proof}
For $d > f$, the right ordered multiplications (i.e., $\vect{\hat{A}}(\vect{H}^l\vect{W}^l)$) in Eqn. \ref{eqn:gcn} would cost less computations than the left order (i.e., $(\vect{\hat{A}}\vect{H}^l)\vect{W}^l$). The total number of FLOPs for $\vect{T} = \vect{H}^l\vect{W}^l$ are $2dfn -fn$, where $\vect{T}\in \mathbb{R}^{n\times f}$ is a temporary matrix. The number of FLOPs for $\vect{\hat{A}}\vect{T}$ are $fm + f(m-n)$, or $2fm - fn$. Plugging $MACs = \frac{FLOPs}{2}$ (same as ULTH for GCN~\cite{chen2021unified}), the total number of MAC operations on $l$-th layer are $\frac{2dfn - fn + 2fm -fn}{2}$, or $(fm+dfn-fn)$.
\end{proof}
\begin{lemma}
\label{lem:lower}
If we introduce $a\%$, $h\%$, and $w\%$ sparsity to $\vect{\hat{A}}$, $\vect{H}^l$, and $\vect{W}^l$, respectively, then the total MAC operations are bounded by $(1-w)(1-h')(fm+dfn-fn)$, where $d > f$ and $a\approx h=h'$.
\end{lemma}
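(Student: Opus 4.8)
The plan is to build on Lemma~\ref{lem:upper}, which already decomposes the dense MAC count $fm + dfn - fn$ as the cost of forming the temporary $\vect{T} = \vect{H}^l\vect{W}^l$ (i.e.\ $dfn - \tfrac{fn}{2}$ MACs, or $2dfn - fn$ FLOPs) plus the cost of forming $\vect{\hat{A}}\vect{T}$ (i.e.\ $fm - \tfrac{fn}{2}$ MACs, or $2fm - fn$ FLOPs) in the right-ordered product $\vect{\hat{A}}(\vect{H}^l\vect{W}^l)$. I would then apply the three sparsifications to these two multiplications in turn and recombine.

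First I would adopt the uniform ``spread-out nonzeros'' model implicit throughout this line of work: a scalar multiply--accumulate $H_{ik}W_{kj}$ arising in $\vect{T}$ is performed only when both operands survive, which under $h\%$ and $w\%$ uniform sparsification happens for a $(1-h)(1-w)$ fraction of the $(i,j,k)$ index triples, so the cost of the $\vect{H}^l\vect{W}^l$ multiply scales by $(1-h)(1-w)$. Similarly, a MAC $\hat{A}_{ik}T_{kj}$ in $\vect{\hat{A}}\vect{T}$ is performed only when $\hat{A}_{ik}\neq 0$ and $T_{kj}\neq 0$; with $a\%$ of $\vect{\hat{A}}$ zeroed and $\vect{T}$ carrying the density induced by the already-sparsified embedding and weight matrices, this cost scales by roughly $(1-a)(1-w)(1-h')$, where $h'$ is the effective sparsity $\vect{T}$ inherits from $\vect{H}^l$. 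Substituting into the decomposition above gives a total of about
\begin{equation*}
(1-h)(1-w)\!\left(dfn - \tfrac{fn}{2}\right) + (1-a)(1-w)(1-h')\!\left(fm - \tfrac{fn}{2}\right)
\end{equation*}
MAC operations; invoking the stated regime $a \approx h = h'$ lets me pull the common factor $(1-w)(1-h')$ out of both terms and then recollapse $\bigl(dfn - \tfrac{fn}{2}\bigr) + \bigl(fm - \tfrac{fn}{2}\bigr) = fm + dfn - fn$, which is exactly the Lemma~\ref{lem:upper} count, giving the claimed $(1-w)(1-h')(fm + dfn - fn)$. The extra factor $(1-h')$ still multiplying the $fm$-term, and the residual $-fn$ cross-corrections, only reduce the count further, so the expression is a valid upper bound.

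The main obstacle is justifying this multiplicativity. In the worst case the surviving nonzeros of two sparsified factors can align so that the MAC count of their product is not the product of their densities times the dense count, and the density $\vect{T}$ genuinely inherits from $\vect{H}^l$ and $\vect{W}^l$ is not literally $(1-w)(1-h')$. Hence the heart of a clean proof is to assume---as is standard in this setting---or to argue, for the random, Top-$k$, and sorting-based sparsifiers of Section~\ref{sec:sparsification}, that the induced zero patterns are uniform enough for the product-of-densities estimate to be tight, and to use $a \approx h = h'$ to make the cross terms and the $-fn$ corrections negligible against $fm + dfn - fn$. Everything past that point is routine bookkeeping layered on top of Lemma~\ref{lem:upper}.
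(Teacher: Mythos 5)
Your proposal follows essentially the same route as the paper's proof: split the right-ordered product $\vect{\hat{A}}(\vect{H}^l\vect{W}^l)$ into the two constituent multiplications, scale each operation count by the product of the surviving densities, and use $a\approx h=h'$ to pull out the common factor $(1-w)(1-h')$ and recover the Lemma~\ref{lem:upper} count $fm+dfn-fn$. The only deviation is the extra $(1-h')$ factor you attach to the $\vect{\hat{A}}\vect{T}$ step --- the paper instead treats $\vect{T}$ as a dense $n\times(1-w)f$ matrix because the $h$-sparsity is contracted away in the inner dimension of $\vect{H}^l\vect{W}^l$, so that step scales only by $(1-a)(1-w)$ --- but since you discard that factor when passing to the final bound you land on the same expression, and both arguments rest on the same heuristic ``distribute the sparsity across dimensions / product of densities'' cost model that you rightly flag as assumed rather than proven.
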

\begin{proof}
For simplicity of the computations, we distribute the percentage of sparsity across matrix dimensions. Then, an $a\%$ sparsity of graph would leave $(1-a)m$ non-zero elements in $\vect{\hat{A}}$, an $h\%$ sparsity of embedding would leave $n\times (1-h)d$ non-zero elements in $\vect{H}$, and a $w\%$ sparsity of weight matrix would leave $(1-h)d\times (1-w)f$ non-zero elements in $\vect{W}$. The right to left order multiplication would cost $(1-h)(1-w)dfn + (1-w)(d-dh-1)fn$ FLOPs for $\vect{T} = \vect{H}^l\vect{W}^l$, where $\vect{T}\in \mathbb{R}^{n\times (1-w)f}$ is a temporary matrix. Similarly, the number of FLOPs for $\vect{\hat{A}}\vect{T}$ are $(1-a)(1-w)fm + (1-a)(1-w)(m-n)f$. Then, the total number of FLOPs on the $l$-th layer are as follows:
\begin{flalign*}
\begin{split}
    & = 2(1-h)(1-w)dfn - (1-w)fn + 2(1-a)(1-w)fm\\ &-(1-a)(1-w)fn\\
   & \le 2(1-h)(1-w)dfn + 2(1-a)(1-w)fm -2(1-a)(1-w)fn \\
   & = 2(1-w)(1-h')(fm+dfn-fn)
   \end{split}
\end{flalign*}
From the above computational bound, we can derive that the MACs on the $l$-th layer are bounded by $(1-w)(1-h')(fm+dfn-fn)$.
\end{proof}
\begin{theorem}
\label{theorem:ratio}
For $a\%$, $h\%$, and $w\%$ sparsity to $\vect{\hat{A}}$, $\vect{H}^l$, and $\vect{W}^l$ matrices, the total reduction factor of MAC operations in GCN is bounded by $\frac{1}{(1-w)(1-h')}$, where $a\approx h=h'$.
\end{theorem}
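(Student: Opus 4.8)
The plan is to obtain the theorem as an immediate corollary of Lemmas~\ref{lem:upper} and~\ref{lem:lower} by forming the ratio of the dense cost to the sparsified cost. First I would fix notation: write $M_0 := fm + dfn - fn$ for the number of MAC operations incurred by the dense layer, which is exactly the quantity established in Lemma~\ref{lem:upper}, and observe that $M_0 > 0$ whenever $d > f$ and $f \ge 1$ (indeed $M_0 = fm + fn(d-1) > 0$), so division by $M_0$ is legitimate.

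Next I would recall from Lemma~\ref{lem:lower} that after introducing $a\%$, $h\%$, and $w\%$ sparsity into $\vect{\hat{A}}$, $\vect{H}^l$, and $\vect{W}^l$ respectively (with $a \approx h = h'$), the sparsified layer costs at most $M_1 := (1-w)(1-h')\,M_0$ MAC operations. The reduction factor is by definition the ratio of the dense cost to the sparsified cost, $\rho := M_0 / M_1$. Substituting the expression for $M_1$ and cancelling the common positive factor $M_0$ gives
\[
\rho \;=\; \frac{M_0}{(1-w)(1-h')\,M_0} \;=\; \frac{1}{(1-w)(1-h')}.
\]
Since Lemma~\ref{lem:lower} only upper-bounds the sparsified cost, this argument in fact shows $\rho \ge \frac{1}{(1-w)(1-h')}$, i.e.\ the achieved speed-up is at least the stated factor; phrasing it as ``bounded by $\frac{1}{(1-w)(1-h')}$'' records the guaranteed reduction.

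Because the computation at every layer has the same algebraic form (only the dimensions $d,f$ and the per-layer masks change), the same ratio holds layerwise and is preserved under summation over layers, so the bound extends to the whole network. The only genuine work here lives in the two preceding lemmas --- in particular the FLOP accounting for the two sparse--dense products and the simplification that uses $a \approx h = h'$ to collapse the $(1-a)$ and $(1-h)$ factors into $(1-h')$ --- so the ``hard part'' of Theorem~\ref{theorem:ratio} itself is essentially bookkeeping: ensuring the same denominator $M_0$ appears in both the numerator (Lemma~\ref{lem:upper}) and the bound (Lemma~\ref{lem:lower}) so that it cancels cleanly, and being explicit that $M_0 \neq 0$.
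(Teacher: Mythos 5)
Your proposal follows exactly the paper's argument: the paper proves Theorem~\ref{theorem:ratio} in one line by taking the ratio of the bound in Lemma~\ref{lem:upper} to that in Lemma~\ref{lem:lower}, which is precisely your cancellation of $M_0$. Your additional remarks --- that $M_0 > 0$ justifies the division and that the inequality direction from Lemma~\ref{lem:lower} makes the reduction factor \emph{at least} $\tfrac{1}{(1-w)(1-h')}$ --- are correct refinements of the same route, not a different one.
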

\begin{proof}
We can infer it by taking the ratio of the result of Lemma \ref{lem:upper} to Lemma \ref{lem:lower} that the reduction is bounded by $\frac{1}{(1-w)(1-h')}$.
\end{proof}
\textbf{Complexity.} From Lemma  \ref{lem:lower}, we can deduce that the asymptotic time complexity of the sparse training procedure for a $L$-layered GCN would be $O((1-w)(1-h')(m + fn)fL)$, where $f\approx d$, and $|E|=m$. Similarly, the asymptotic memory complexity would be $O(Lf((1-h')n + (1-w)f) + (1-h')m)$.
\section{Experiments}
\subsection{Experimental Setup}
\textbf{Overview.} The goal of our experiments is to explore the sparsity of different matrices in Eqn. \ref{eqn:gcn} without sacrificing the baseline accuracy. We primarily aim to address the following two key Research Questions (RQs): \textbf{RQ1:} How does the performance of node classification vary with individual sparsity techniques? \textbf{RQ2:} What would be the achievable combined sparsity for benchmark graphs?
\begin{table}[!htb]
\centering
\caption{Summary of the Graph Datasets.}
\label{tab:dataset}
\begin{tabular}{p{1.25cm}|p{0.75cm}|p{0.82cm}|p{0.86cm}|p{1.1cm}|p{1.4cm}} 
\hline
\textbf{Graphs}    & \textbf{Nodes} & \textbf{Edges}   & \textbf{Classes} & \textbf{Features} & \textbf{Avg. Deg.}  \\ 
\arrayrulecolor{black}\cline{1-1}\arrayrulecolor{black}\cline{2-6}
Cora      & 2,708 & 10,556  & 7       & 1,433    & 3.89       \\
Citeseer  & 3,327 & 9,104   & 6       & 3,703    & 2.74       \\ 
Pubmed & 19,717  &   88,648   & 3 &   500 &  4.5 \\

\hline
\end{tabular}
\end{table}
\begin{figure*}[!htb]
    \centering
    \includegraphics[width=0.31\linewidth,height=3.1cm]{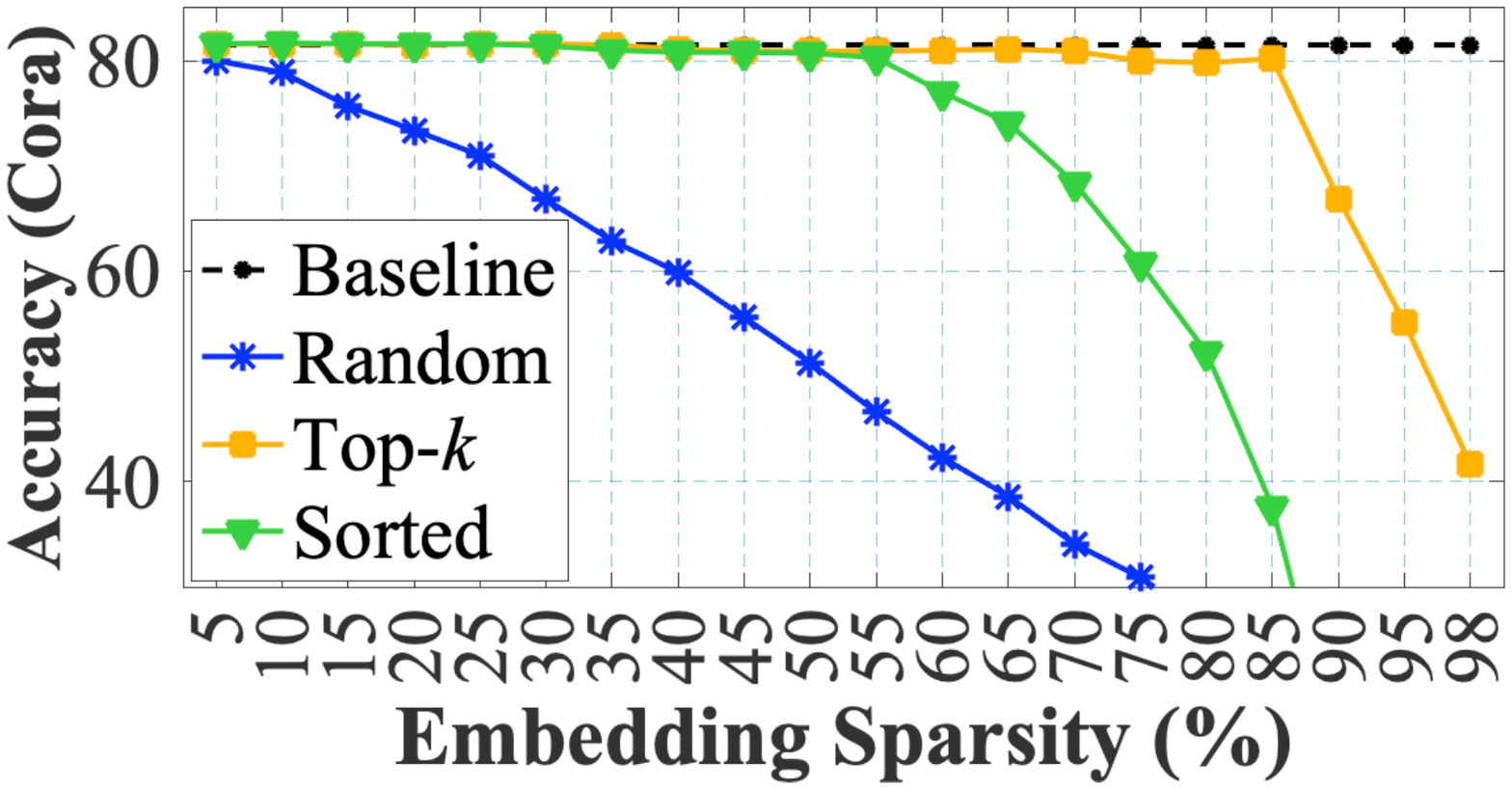}
    \includegraphics[width=0.31\linewidth,height=3.1cm]{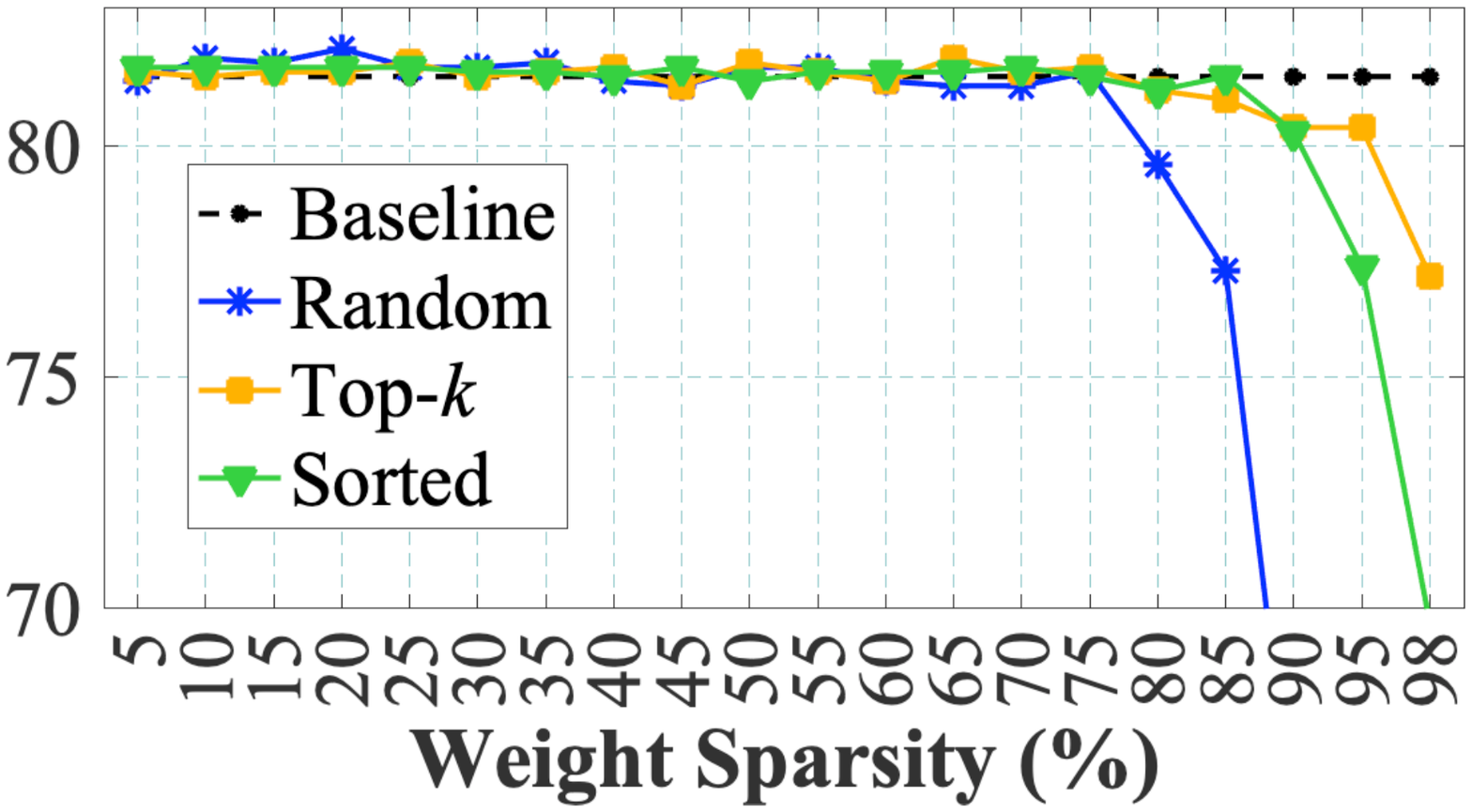}
    \includegraphics[width=0.31\linewidth,height=3.1cm]{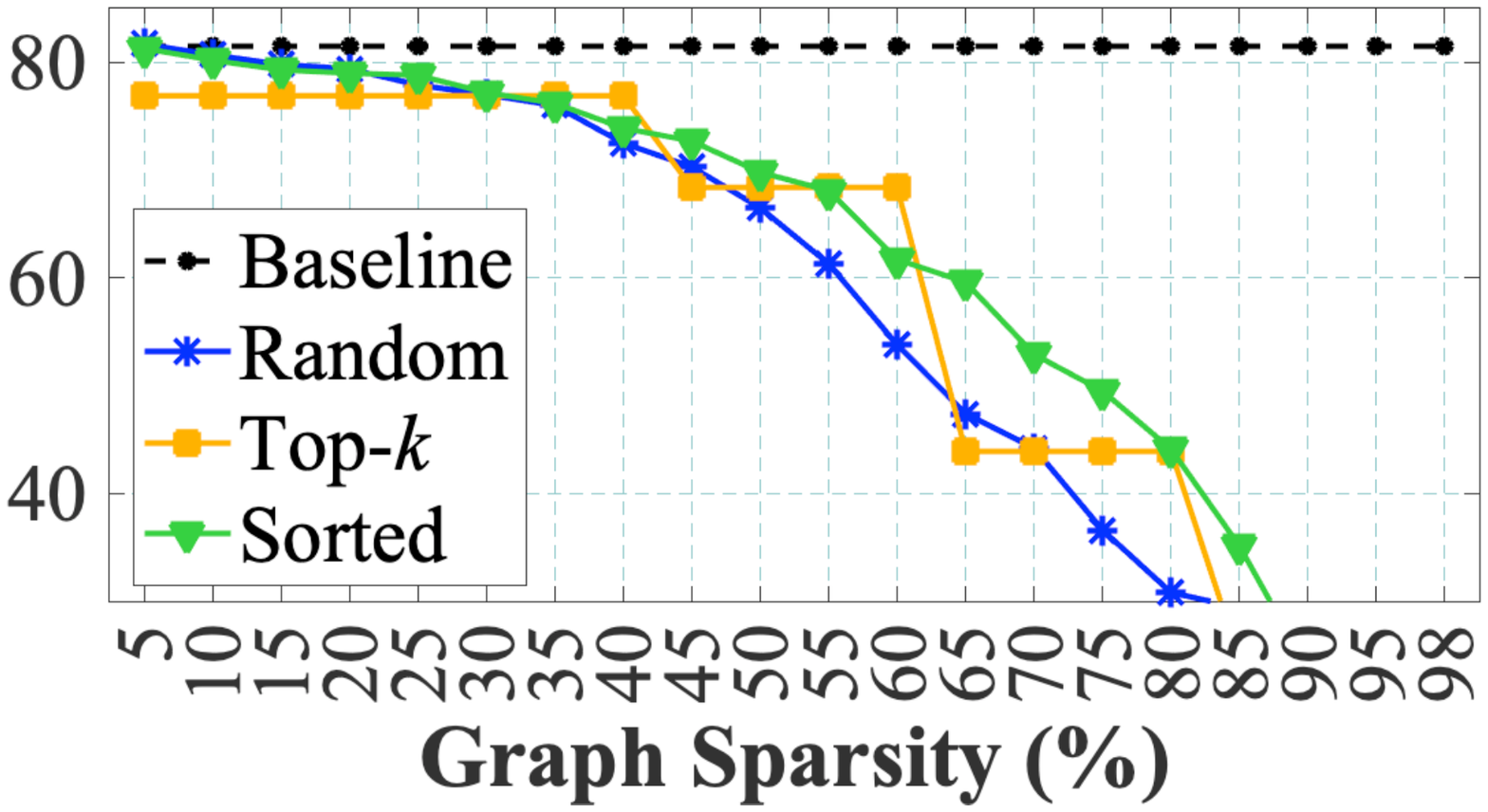}
    \includegraphics[width=0.31\linewidth,height=3.1cm]{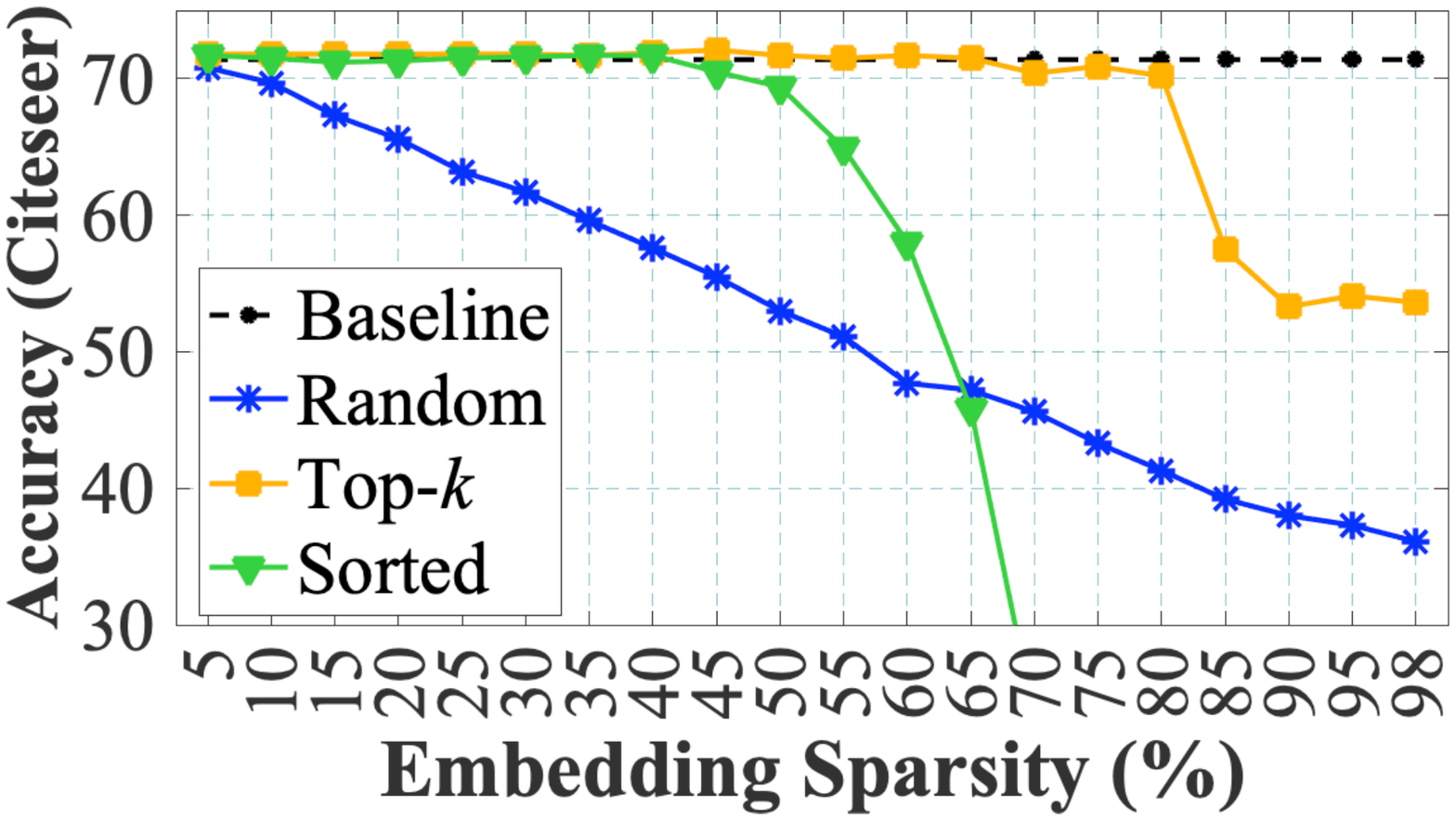}
    \includegraphics[width=0.31\linewidth,height=3.1cm]{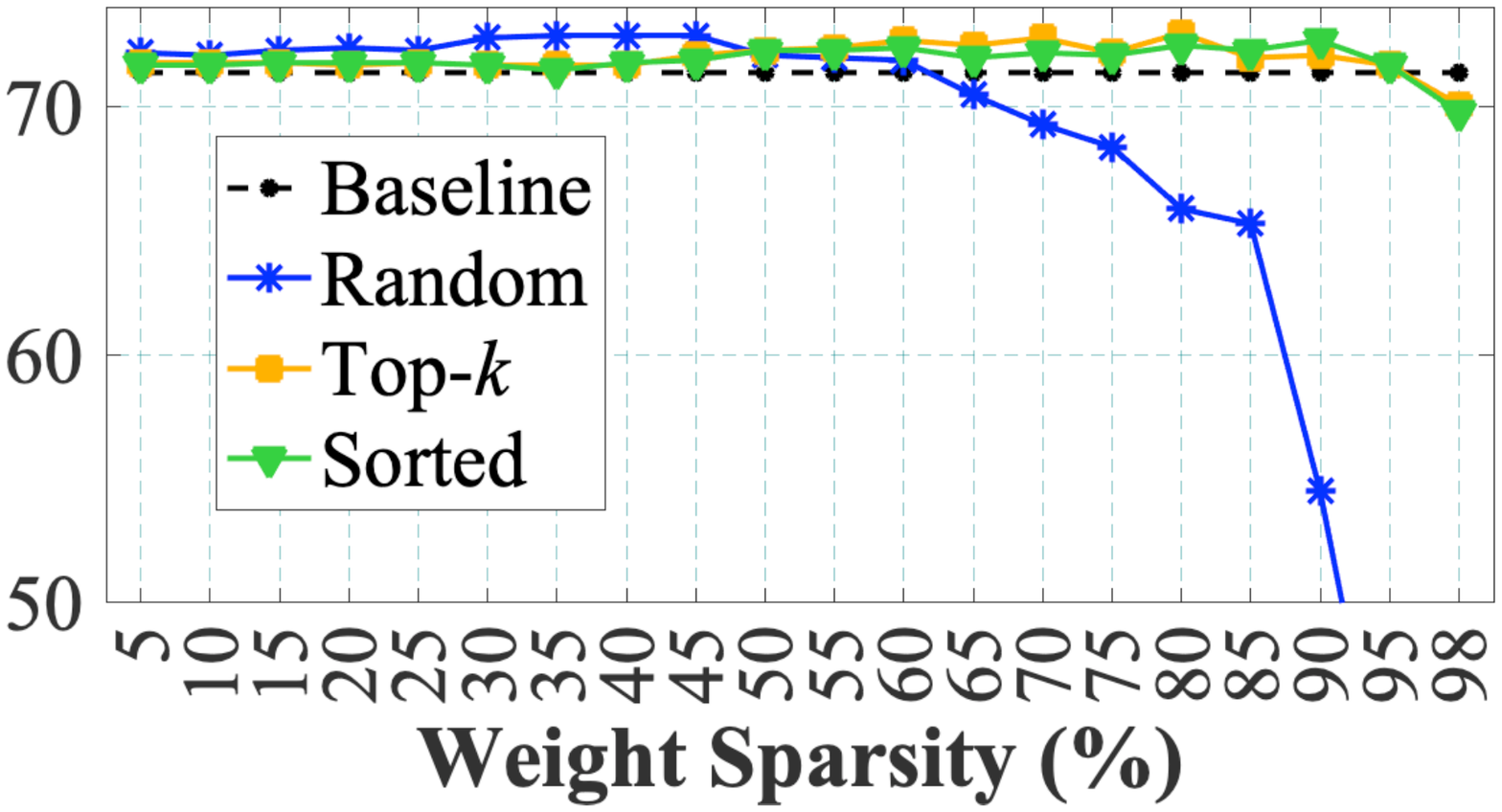}
    \includegraphics[width=0.31\linewidth,height=3.1cm]{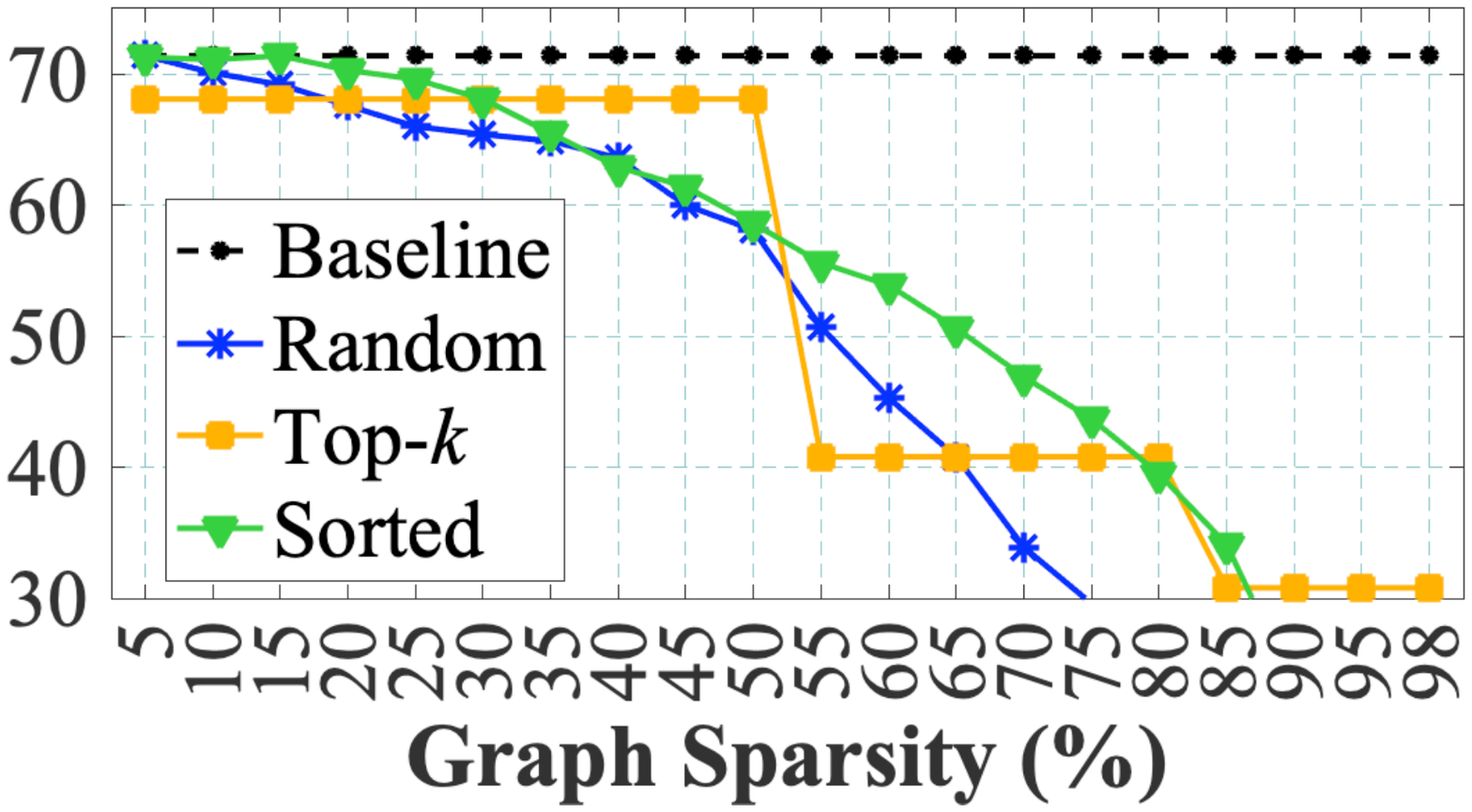}
    \includegraphics[width=0.31\linewidth,height=3.1cm]{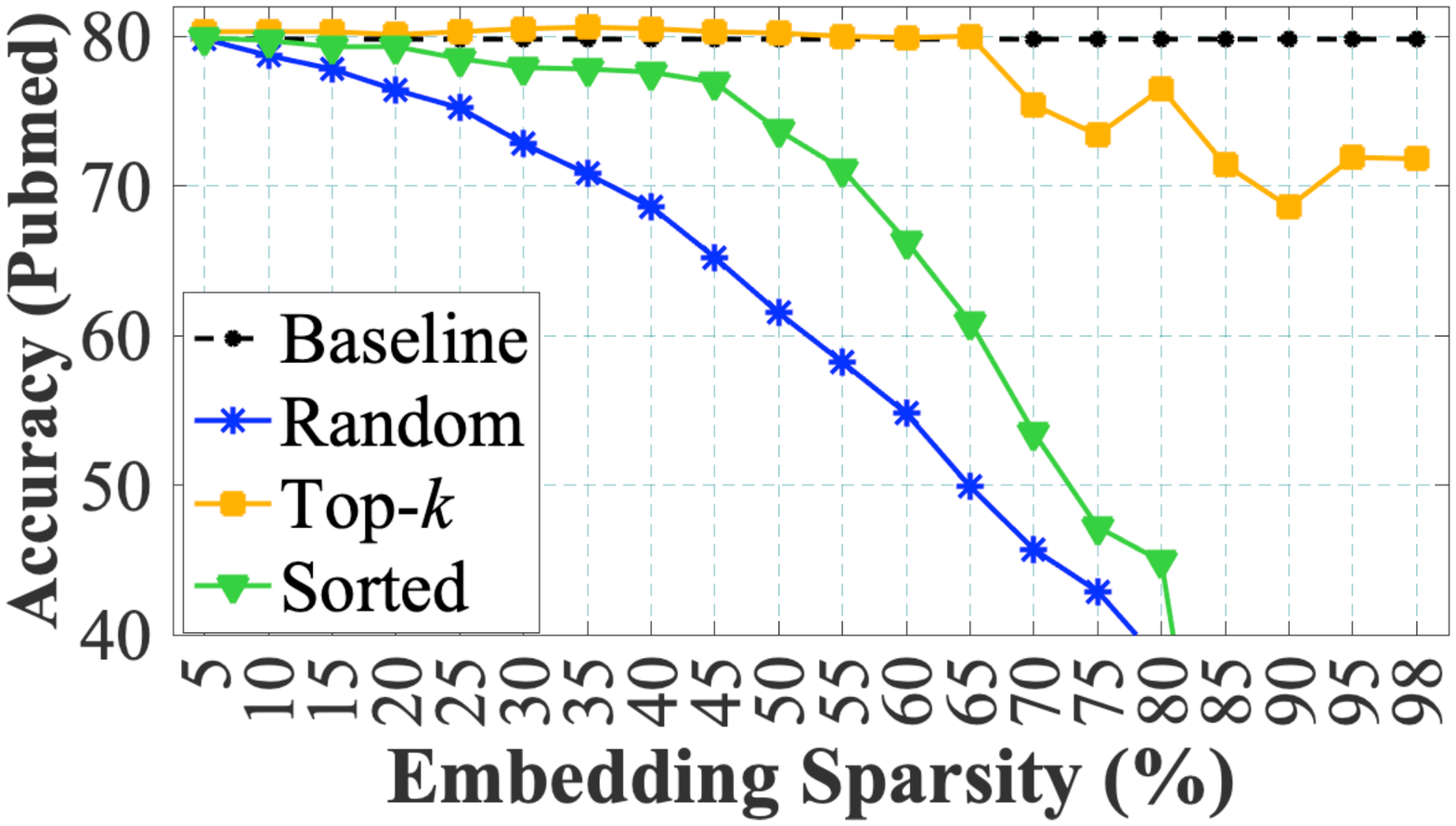}
    \includegraphics[width=0.31\linewidth,height=3.1cm]{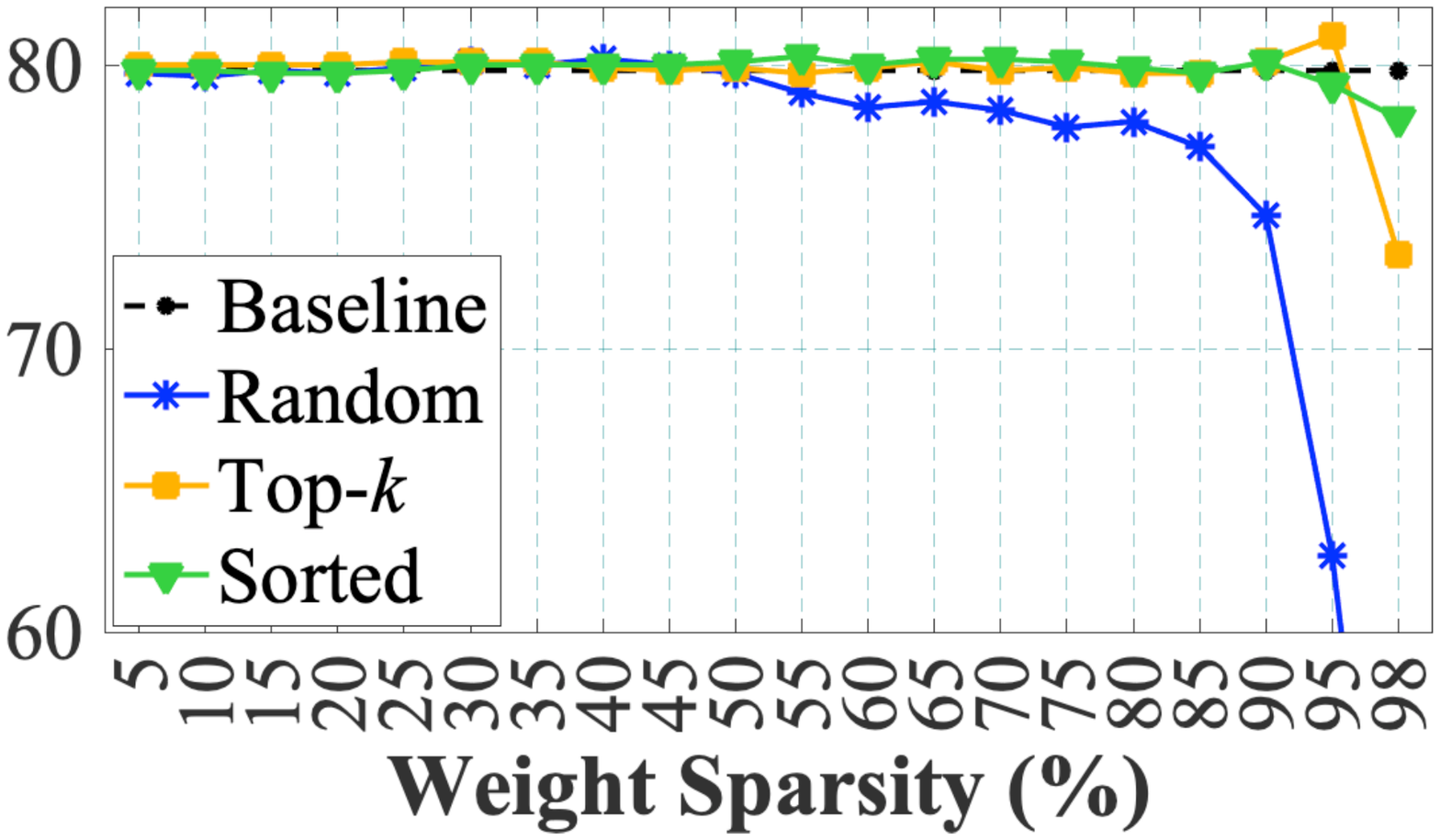}
    \includegraphics[width=0.31\linewidth,height=3.1cm]{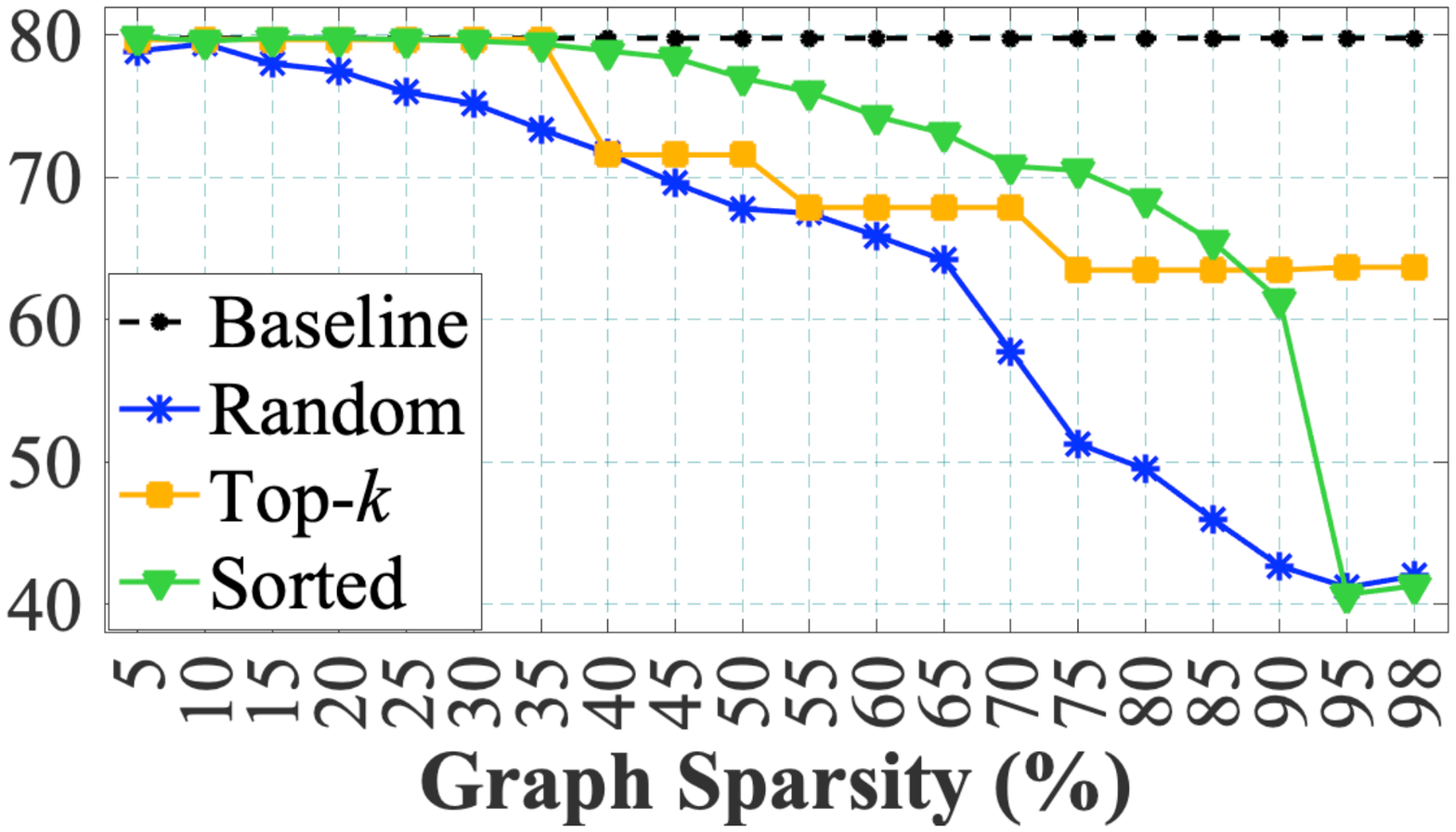}
    \caption{The test accuracy of benchmark graphs using different percentages of sparsity for the node classification tasks. Baseline accuracy is achieved without introducing any sparsity.}
    \label{fig:homophily_sparsity}
\end{figure*}
\textbf{Environment.} We conduct all the experiments of SparseGCN on a server machine. We have implemented the SparseGCN workflow in the PyTorch-Geometric (PyG) framework version 2.0.0~\cite{fey2019pyg}. Unless otherwise mentioned, we use default parameters in a 2-layered GCN model and develop our workflow on top of it. We report the summary of our datasets in Table \ref{tab:dataset} (used by past studies \cite{chen2021unified,you2021gebt}). 
\subsection{Results and Analysis}
\textbf{Individual Sparsity Analysis.} We report the sparsity of different graphs in Fig. \ref{fig:homophily_sparsity}. Here, we show the test accuracy for three different sparsification techniques, namely, (i) Random, (ii) Top-$k$, and (iii) Sorting-based. The $x$-axis shows different percentages of sparsity.

\begin{observation}
\label{obs:embedding}
The Top-$k$ approach can retain the baseline accuracy with a higher percentage of sparsity in the embedding matrix.
\end{observation}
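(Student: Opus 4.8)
The plan is to establish Observation~\ref{obs:embedding} empirically, since it is a statement about the behavior of the three sparsification heuristics on benchmark data rather than an analytic identity. First I would isolate the embedding-sparsification experiments, i.e.\ the three leftmost panels of Fig.~\ref{fig:homophily_sparsity} (the \texttt{\_h} plots for Cora, Citeseer, and Pubmed), holding the graph and weight matrices dense so that the measured test accuracy reflects only the effect of zeroing entries of $\vect{H}^l$ through the mask $m^l_h$. For each dataset and each of the three techniques (Random, Top-$k$, Sorting-based) I would read off the largest value of $h\%$ at which the test accuracy stays within a fixed tolerance band of the baseline accuracy, averaging the Random curve over several seeds so that it is not a single draw.

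The second step is the comparison itself: tabulating those breakpoints, I expect that for every dataset the Top-$k$ curve remains at the baseline level up to a strictly larger $h\%$ than either Sorting-based or Random, which is exactly the content of the observation. To make the claim more robust I would also re-run the sparsify-and-retrain loop of Section~\ref{sec:sparsification} under a couple of different training-length settings and confirm that the same ordering of the three techniques persists, so the ranking is not an artifact of one epoch budget.

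Finally I would supply the intuition that explains the ranking, tying it back to Eqn.~\ref{eqn:gcn}. Because $\vect{H}^l$ is multiplied on the right by $\vect{W}^l$, each output coordinate is a weighted combination of a node's feature row, so what matters per node is which coordinates of that row survive. Top-$k$ is a \emph{local}, row-wise rule that always keeps the $k$ largest-magnitude entries of every row, so every node retains its most informative features and no row is ever fully annihilated; Sorting-based is \emph{global} and can therefore zero out every entry of a low-norm row (effectively deleting that node's contribution), while Random ignores magnitudes entirely. Hence Top-$k$ preserves the most ``signal per removed entry,'' and accuracy degrades only at a higher sparsity level.

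The main obstacle is not a calculation. Since Observation~\ref{obs:embedding} is empirical, the real difficulty is arguing that the Top-$k$ advantage is stable --- that it holds across datasets, random seeds, and training budgets rather than being a coincidence of one configuration --- and pinning down ``retains the baseline accuracy'' precisely enough (via an explicit tolerance band) that the breakpoints being compared are well defined.
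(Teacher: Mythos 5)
Your proposal matches the paper's own justification: the observation is supported empirically by the left column of Fig.~\ref{fig:homophily_sparsity}, where the Top-$k$ curves stay at baseline accuracy up to higher $h\%$ than Random or Sorting-based, followed by the same local-versus-global intuition (the paper phrases it as Top-$k$ pruning the same number of smallest-magnitude entries per row, with an effect akin to reducing the hidden dimension, while you phrase it as no row being annihilated). Your added methodological safeguards (explicit tolerance band, seed averaging, varying the epoch budget) go beyond what the paper actually does but do not change the approach.
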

In the sparsity results of embedding matrix (see Fig. ~\ref{fig:homophily_sparsity}, left column), we observe that the Top-$k$ method can retain the baseline accuracy level with a higher percentage of sparsity compared to the random and sorting-based techniques. The Top-$k$ approach is a local sparsification technique that prunes the same number of entries from each row of the embedding matrix based on the absolute value. The impact of the Top-$k$ pruning is similar to reducing the hidden dimension (assuming $k<d$ and $k<f$). 
\begin{observation}
\label{obs:weight}
The Sorting-based approach can retain the baseline accuracy with a higher percentage of sparsity in the weight matrix and adjacency matrix than the other sparsification techniques.
\end{observation}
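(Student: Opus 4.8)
Observation~\ref{obs:weight} is an empirical statement comparing the three pruning strategies, so the plan is to substantiate it from the accuracy curves in Figure~\ref{fig:homophily_sparsity} (middle column: weight sparsity; right column: adjacency sparsity) and then to give a mechanistic explanation for why the sorting-based method dominates on these two matrices. First I would, for each dataset (Cora, Citeseer, Pubmed) and each of the three strategies, locate the largest sparsity level at which the curve still stays within the baseline band, and check that for the sorting-based method this threshold is higher than for Top-$k$ and for random pruning on both $\vect{\hat{A}}$ and $\vect{W}^l$. This pins the claim down at the level of the reported benchmarks.

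Second, the intuition. Sorting-based pruning is global: it deletes exactly the $p\%$ nonzeros of smallest absolute value across the whole matrix, so by construction it discards the entries that contribute least (in magnitude) to the products $\vect{H}^l\vect{W}^l$ and $\vect{\hat{A}}\vect{T}$. Top-$k$ deletes a fixed count per row, so in rows whose entries are all comparatively large it removes genuinely informative values, while in near-empty rows it may wipe them out entirely; random pruning ignores magnitude altogether. For the learned weight matrix the entries are typically heavy-tailed with a large mass near zero, so a global threshold removes many of them with negligible change to $\vect{H}^l\vect{W}^l$; similarly, entries of the renormalized adjacency $\vect{\hat{A}}$ incident to high-degree vertices are scaled by $\tilde{D}_{ii}^{-1/2}\tilde{D}_{jj}^{-1/2}$ and are therefore small, so they too can be dropped globally with little effect on the neighborhood aggregation.

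Third, to sharpen this I would quantify the output perturbation caused by removing a set $S$ of entries: for the weight product it is controlled by the spectral norm of $\vect{H}^l$ times the $\ell_2$ norm of the removed weights, and analogously for $\vect{\hat{A}}\vect{T}$; the sorting-based choice minimizes that $\ell_2$ norm over all size-$|S|$ subsets, hence yields the smallest guaranteed distortion of the layer output among the three schemes and, heuristically, the smallest accuracy drop. This also contrasts cleanly with Observation~\ref{obs:embedding}: the columns of $\vect{H}^l$ are hidden coordinates, so per-row (Top-$k$) pruning there behaves like a learnable dimension reduction that the network can absorb, which is why Top-$k$ wins on $\vect{H}^l$ but not on $\vect{W}^l$ or $\vect{\hat{A}}$.

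The main obstacle is that this is a training- and dataset-dependent claim rather than a theorem: the ordering of the three crossover thresholds could flip under other learning rates, initializations, or depths, and the \emph{baseline band} itself needs a precise definition (say, within one standard deviation over random seeds). The honest form of the argument is therefore: (i) fix that band, (ii) report that the sorting-based method achieves the highest sustainable sparsity on $\vect{\hat{A}}$ and $\vect{W}^l$ across all three benchmarks and several seeds, and (iii) offer the perturbation bound above as an explanatory mechanism rather than a rigorous proof. I do not expect a closed-form proof, since the accuracy-versus-sparsity curve is governed by the loss landscape of the trained GCN, which we do not control analytically.
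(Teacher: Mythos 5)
Your proposal matches the paper's approach: the observation is justified empirically by reading off, from the middle and right columns of Figure~\ref{fig:homophily_sparsity}, the highest sparsity at which each technique still tracks the baseline, with the paper additionally noting consistency with ULTH and explaining the adjacency results via average degree (Cora/Citeseer degrade past 20\% while Pubmed tolerates 40\%). Your added perturbation-bound rationale and the caveat about training-dependence go beyond what the paper states (which in fact concedes that Top-$k$ can be competitive on $\vect{W}$), but the core argument is the same.
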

In the experiments of neural network weight sparsity (see Fig. \ref{fig:homophily_sparsity}, middle column), we add the same percentage of sparsity to both layers of the GCN. We observe that it is similar  to previous studies, which suggests that the sorting-based method can retain the baseline accuracy better than others with a higher percentage of sparsity \cite{chen2021unified}. Notice that the Top-$k$ method can be competitive to the Sorting-based method. The average degree of Cora and Citeseer is comparatively low i.e., they are already very sparse. Making these graphs sparser may disconnect some vertices, which prevents information propagation from neighbors. Thus, even with a small percentage of sparsity in Cora and Citeseer, test accuracy tends to drop. On the other hand, the Pubmed graph is comparatively bigger. Thus, pruning some edges does not affect the test accuracy significantly. For example, the Sorting-based approach can retain the baseline accuracy with up to 40\% graph sparsity, whereas test accuracy drops above 20\% graph sparsity for Cora and Citeseer.  



\textbf{Comparison with Unified Lotter Ticket Hypothesis.}
We compare our results with ULTH~\cite{chen2021unified} which is a state-of-the-art approach to introduce sparsity to the adjacency matrix and model weights in GNNs. To introduce sparsity using SparseGCN, we use the Top-$k$ approach for the embedding matrix, $\vect{H}$ (Observation \ref{obs:embedding}) and the Sorting-based approach for other matrices. We run both ULTH and SparseGCN for 64-dimensional embedding and report the results in Table \ref{tab:comparison}. We use a grid search technique to find the highest possible sparsity in different matrices retaining a similar level of test accuracy compared to the baseline. We observe that both ULTH and SparseGCN attain a similar level of test accuracy with similar percentages of sparsity in $\vect{\hat{A}}$, and $\vect{W}$; however, SparseGCN introduces additional sparsity to the embedding matrix. More specifically, SparseGCN can introduce 8.4\% to 11.5\% more sparsity to the embedding matrix for these datasets retaining a similar level of test accuracy. For the Pubmed graph, the level of sparsity, as well as test accuracy, outperform ULTH. These empirical results show that a sparsity can be introduced to all matrices of GCN for optimal resource utilization. 
\begin{table}[!h]
\centering
\caption{Comparison of combined sparsity and test accuracy between ULTH and SparseGCN. The sparsity in each row of columns $\vect{\hat{A}}$, $\vect{W}$, and $\vect{H}$, are used combinedly.}
\label{tab:comparison}
\arrayrulecolor{black}
\begin{tabular}{c|c|ccc|c} 
\hline
\textbf{Graphs}                    & \textbf{Methods}   & \textbf{$\vect{\hat{A}}$}                                 & \textbf{$\vect{W}$}                                 & \textbf{$\vect{H}$} & \textbf{Accuracy}                                    \\ \hline

\multirow{2}{*}{Cora}     & ULTH~\cite{chen2021unified}      & 14.8\%                                     & 59.1\%                                     & 0.0\%      & 80.0\%                                      \\ 

                          & SparseGCN & 14.5\%                                     & 61.0\%                                     & {\cellcolor[rgb]{0.851,0.918,0.827}}8.4\%      & 80.0\%                                      \\ 
\hline
\multirow{2}{*}{Citeseer} & ULTH~\cite{chen2021unified}       & 19.0\%                                     & 70.1\%                                     & 0.0\%      & 71.5\%                                      \\ 

                          & SparseGCN & 19.1\%                                     & 70.8\%                                     & {\cellcolor[rgb]{0.851,0.918,0.827}}9.8\%      & 71.4\%                                      \\ 
\hline
\multirow{2}{*}{Pubmed}   & ULTH~\cite{chen2021unified}       &  20.6\% & 70.0\% & 0.0\%      &  77.8\% \\ 

                          & SparseGCN & 23.0\%                                     & 75.0\%                                     & {\cellcolor[rgb]{0.851,0.918,0.827}}11.5\%     & 80.0\%                                      \\
\hline
\end{tabular}
\arrayrulecolor{black}
\end{table}
\section{Conclusions}
In this paper, we study the sparsification of the embedding matrix of any GNN for the first time. We also explore one new sparsification techniques, namely, the Top-$k$ approach. Using the SparseGCN pipeline, we explore the viability of full sparsification of GCN. 
We demonstrate that sparsifying all matrices simultaneously reduces the computational cost more than prior methods that sparsified a subset of matrices. Our theoretical analysis shows that a fully sparse GNN could attain similar baseline performance with fewer computations (additional sparsity reduces MACs in Theorem \ref{theorem:ratio}). In the existing GNN models of popular graph learning frameworks \cite{wang2019dgl,fey2019pyg}, the sparse gradient computation is not explicitly supported. Thus, we aim to implement a fully sparse framework for GNN with efficient sparse-sparse matrix multiplication (SpGEMM) \cite{gu2020bandwidth, srivastava2020matraptor}.
\bibliographystyle{unsrt}
\bibliography{main}

\end{document}